\newtheorem{theorem}{Theorem}
\newtheorem{example}{Example}
\newtheorem{property}{Property}
\newcommand{\calC}{\ensuremath{\mathcal{C}}\xspace}
\newcommand{\xhat}{\ensuremath{\hat{x}}\xspace}
\newcommand{\yhat}{\ensuremath{\hat{y}}\xspace}
\newcommand{\allvars}{V\xspace}
\newcommand{\currentclusters}{\calC}
\newcommand{\ineq}[1]{cons(#1)}
\newcommand{\varsof}[1]{varsof(#1)}
\newcommand{\rccluster}{RC-cluster\xspace}
\newcommand{\rcclusters}{RC-clusters\xspace}
\newcommand{\rcD}[3]{\ensuremath{{#1}^{rc}_{#2, #3}}\xspace}
\newcommand{\minrc}[1]{\ensuremath{minrc(#1)}}
\newcommand{\gob}{GOBNILP\xspace}
\newcommand{\cpb}{CPBayes\xspace}
\newcommand{\cpbcut}{ELSA\xspace}
\newcommand{\scip}{SCIP\xspace}
\newcommand{\cplex}{cplex\xspace}
\newcommand{\tuple}[1]{\ensuremath{\left\langle #1 \right\rangle}}
\title{Improved Acyclicity Reasoning for Bayesian Network Structure Learning with Constraint Programming}
\author{
Fulya Trösser$^1$\footnote{Contact Author}\and
Simon de Givry$^1$\And
George Katsirelos$^2$\\
\affiliations
$^1$Université de Toulouse, INRAE, UR MIAT, F-31320, Castanet-Tolosan, France\\
$^2$UMR MIA-Paris, INRAE, AgroParisTech, Univ. Paris-Saclay, 75005 Paris, France\\
\emails
\{fulya.ural, simon.de-givry\}@inrae.fr,
gkatsi@gmail.com
}
\begin{document}

\maketitle

\begin{abstract}

Bayesian networks are probabilistic graphical models with a wide range of application areas including gene regulatory networks inference, risk analysis and image processing. Learning the structure of a Bayesian network (BNSL) from discrete data is known to be an NP-hard task with a superexponential search space of directed acyclic graphs. 
In this work, we propose a new polynomial time algorithm for discovering 
a subset of all possible
cluster cuts, a greedy algorithm for 
approximately solving the resulting linear program, and
a generalised arc consistency algorithm for the acyclicity constraint.
We embed these in the constraint programming-based branch-and-bound solver CPBayes 
and show that, despite being suboptimal, they improve performance by orders of
magnitude. The resulting solver also compares favourably with GOBNILP, a state-of-the-art solver for the BNSL problem which solves an NP-hard problem to discover each cut
and solves the linear program exactly.

\end{abstract}

\section{Introduction}\label{Intro}

Towards the goal of explainable AI, Bayesian networks offer a rich framework for probabilistic reasoning. Bayesian Network Structure Learning (BNSL) from discrete observations corresponds to finding a compact model which best explains the data. It defines an NP-hard problem with a superexponential search space of Directed Acyclic Graphs (DAG). Several constraint-based (exploiting local conditional independence tests) and score-based (exploiting a global objective formulation) BNSL methods have been developed in the past.

Complete methods for score-based BNSL include dynamic programming~\cite{SilanderUAI2006}, heuristic search \cite{YuanMaloneJAIR2013,FanYuanAAAI2015},
maximum satisfiability \cite{berg2014learning},
branch-and-cut~\cite{CussensAIJ2017} 
and constraint programming~\cite{cpbayes}. Here, we focus on the latter two. 

\gob \cite{CussensAIJ2017} is a state-of-the-art solver for BNSL.
It implements branch-and-cut in an integer linear programming (ILP)
solver. At each node of the branch-and-bound tree, it generates cuts
that improve the linear relaxation. A major class of cuts generated by
GOBNILP are \emph{cluster cuts}, which identify sets of parent sets
that cannot be used together in an acyclic graph. In order to find
cluster cuts, \gob solves an NP-hard subproblem created from the
current optimal solution of the linear relaxation.

\cpb \cite{cpbayes} is a constraint programming-based (CP) method for
BNSL. It uses a CP model that exploits symmetry and dominance
relations present in the problem, subproblem caching, and a pattern
database to compute lower bounds, adapted from heuristic search
\cite{FanYuanAAAI2015}. \citeauthor{cpbayes} showed that \cpb is competitive with
\gob in many instances. In contrast to \gob, the inference mechanisms
of \cpb are very lightweight, which allows it to explore many orders
of magnitude more nodes 
per time unit, 
even accounting for the fact that computing the
pattern databases before search can sometimes consume considerable
time. On the other hand, the lightweight pattern-based bounding
mechanism can take into consideration only limited information about
the current state of the search. Specifically, it can take into
account the current total ordering implied by the DAG under
construction, but no information that has been derived about the
potential parent sets of each vertex, i.e., the current domains of
parent set variables.

In this work, we derive a lower bound that is computationally
cheaper than that computed by \gob. We give in Section~\ref{sec:Cluster}
a polynomial-time algorithm that discovers a class of cluster
cuts that provably improve the linear relaxation. 
In Section~\ref{sec:lpsolve}, we give a greedy algorithm for solving the linear
relaxation, 
inspired by similar algorithms for MaxSAT and Weighted Constraint Satisfaction Problems (WCSP). 
Finally, in Section~\ref{sec:GAC} we give an
algorithm that enforces generalised arc consistency on the acyclicity
constraint, based on previous work by \citeauthor{cpbayes}, but with
improved complexity and practical performance. In Section~\ref{sec:exp}, we
show that our implementation of these techniques in \cpb leads to
significantly improved performance, both in the size of the search
tree explored and in runtime.

\section{Preliminaries}\label{sec:Prel}

We give here only minimal background on (integer) linear
programming and constraint programming, and refer the reader to
existing literature \cite{papadimitriou1998combinatorial,cphandbook}
for more.

\subsubsection{Constraint Programming}

A constraint satisfaction problem (CSP) is a tuple
$\tuple{V, D, C}$, where $V$ is a set of variables, $D$ is a
function mapping variables to domains and C is a set of
constraints. An assignment $A$ to $V' \subseteq V$ is a mapping from
each $v \in V'$ to $D(v)$. A complete assignment is an assignment to
$V$. If an assignment maps $v$ to $a$, we say it assigns $v=a$.
A constraint is a pair $\tuple{S, P}$, where $S \subseteq V$ is
the \emph{scope} of the constraint and $P$ is a predicate over
$\prod_{V \in S}D(V)$ which accepts assignments to $S$ that
\emph{satisfy} the constraint. For an assignment $A$ to
$S' \supseteq S$, let $A'|_S$ be the restriction of $A$ to
$S$. We say that $A$ satisfies $c = \tuple{S, P}$ if $A|_S$ satisfies
$c$. A problem is satisfied by $A$ if $A$
satisfies all constraints.

For a constraint $c = \tuple{S,P}$ and for $v \in S, a \in D(v)$,
$v=a$ is generalized arc consistent (GAC) for $c$ if there exists an
assignment $A$ that assigns $v=a$ and satisfies $c$. If for all
$v \in S, a \in D(v)$, $v=a$ is GAC for $c$, then $c$ is GAC.
If all constraints are GAC, the problem is GAC.
A constraint is associated with an algorithm $f_c$, called
the propagator for $c$, that 
removes (or \emph{prunes}) values from the domains of
variables in $S$ that are not GAC.

CSPs are typically solved by backtracking search, using propagators to
reduce domains at each node and avoid parts of the search tree that
are proved to not contain any solutions. Although CSPs are decision
problems, the technology can be used to solve optimization problems
like BNSL by, for example, using branch-and-bound and embedding
the bounding part in a propagator. This is the approach used by \cpb.

\subsubsection{Integer Linear Programming}

A linear program (LP) is the problem of finding

\begin{eqnarray*}
  \min \{ c^T x \mid x \in \mathbb{R}^n \land Ax \geq b \land x \geq 0 \}
\end{eqnarray*}

where $c$ and $b$ are vectors, $A$ is a matrix, and $x$ is a vector of
variables. A feasible solution of this problem is one that satisfies
$x \in \mathbb{R}^n \land Ax \geq b \land x \geq 0$ and an optimal
solution is a feasible one that minimizes the objective function $c^T x$.
This can be found in polynomial
time. A row $A_i$ corresponds to an individual linear constraint and
a column $A^T_j$ to a variable. The dual of a linear program $P$ in the above
form is another linear program $D$:

\begin{eqnarray*}
  \max \{b^T y \mid y \in \mathbb{R}^m \land A^Ty \leq c \land y \geq 0 \}
\end{eqnarray*}

where $A, b, c$ are as before and $y$ is the vector of dual
variables. Rows of the dual correspond to variables of the primal and
vice versa. The objective value of any dual feasible solution is a
lower bound on the optimum of $P$. When $P$ is satisfiable, its dual
is also satisfiable and the values of their optima meet. For a given
feasible solution $\xhat$ of $P$, the slack of constraint $i$ is
$slack_{\xhat}(i) = A_i^T x - b_i$. Given a dual feasible solution
$\yhat$, $slack_{\yhat}^D(i)$ is the reduced cost of primal variable
$i, rc_{\yhat}(i)$.
The reduced cost
$rc_{\yhat}(i)$ is interpreted as a lower bound on the amount that the
dual objective would increase over $b^T\yhat$ if $x_i$ is forced to be
non-zero in the primal.

An integer linear program (ILP) is a linear program in which we replace the
constraint $x \in \mathbb{R}^n$ by $x \in \mathbb{Z}^n$ and it is an
NP-hard optimization problem. 

\subsubsection{Bayesian Networks}

A Bayesian network is a directed graphical model $B = \tuple{G, P}$
where $G=\tuple{V, E}$ is a directed acyclic graph (DAG) called the structure of
$B$ and $P$ are its parameters. A BN describes a normalised joint
probability distribution. Each vertex of the graph corresponds to a
random variable and presence of an edge between two vertices denotes
direct conditional dependence.
Each vertex $v_i$ is also associated with a Conditional Probability
Distribution $P(v_i \mid parents(v_i))$.
The CPDs are the parameters of $B$.
%

The approach which we use here for learning a BN from
data is the score-and-search method. Given
a set of multivariate discrete data $I = \{ I_1, \ldots, I_N \}$, a
scoring function $\sigma(G \mid I)$ measures the quality of the BN
with underlying structure $G$. The 
BNSL problem asks to find a structure $G$ that minimises
$\sigma(G \mid I)$ for some scoring function $\sigma$ and it is
NP-hard~\cite{Chickering1996}. Several scoring functions have been
proposed for this purpose, including BDeu \cite{BDeu-1,BDeu-2} and BIC
\cite{BIC-1,BIC-2}. These functions are decomposable and can be
expressed as the sum of local scores which only depend on the set of
parents (from now on, \emph{parent set}) of each vertex:
$\sigma_F(G \mid I) = \sum_{v \in V} \sigma_F^v(parents(v) \mid I)$
for $F \in \{BDeu, BIC\}$. In this setting, 
we first compute local scores and then
compute the structure of minimal score. Although there are potentially
an exponential number of local scores that have to be computed, the
number of parent sets actually considered
is often much smaller, for example because we restrict the
maximum cardinality of parent sets considered or we exploit dedicated pruning rules~\cite{CamposAAAI2010,CamposAIJ2018}.
We denote $PS(v)$
the set of candidate parent sets of $v$
and $PS^{-C}(v)$ those parent sets
that do not intersect $C$.
In the following, we
assume that local scores are precomputed and given as input, as is
common in similar works. We also omit explicitly mentioning $I$ or
$F$, as they are constant for solving any given instance.

Let $C$ be a set of vertices of a graph $G$. $C$ is a violated cluster
if the parent set of each vertex $v \in C$ intersects $C$. Then, we
can prove the following property:

\begin{property}
  \label{prop:cluster}
  A directed graph $G=\tuple{V, E}$ is acyclic if and only if it
  contains no violated clusters, i.e., for all $C \subseteq V$, there
  exists $v \in C$, such that $parents(v) \cap C = \emptyset$.
\end{property}

The \gob solver \cite{CussensAIJ2017} formulates the problem as
the following 0/1 ILP:

\begin{align}
  \min & \sum_{v \in \allvars, S \subseteq V \setminus \{v\}} \sigma^v(S) x_{v,S} \label{eq:ilp-obj} \\
  s.t. 
       & \sum_{S \in PS(v)} x_{v,S} = 1 & \forall v \in \allvars \label{eq:ilp-var} \\
       & \sum_{v \in C, S \in PS^{-C}(v)} x_{v,S} \geq 1 & \forall C \subseteq \allvars \label{eq:ilp-cluster} \\
       & x_{v, S} \in \{0,1\} & \forall v \in \allvars, S \in PS(v) \label{eq:ilp-integrality}
\end{align}

This ILP has a 0/1 variable $x_{v,S}$ for each candidate parent set $S$
of each vertex $v$ where $x_{v,S}=1$ means that $S$ is the parent set
of $v$. The objective \eqref{eq:ilp-obj} directly encodes the
decomposition of the scoring function. The constraint
\eqref{eq:ilp-var} asserts that exactly one parent set is selected for
each random variable. Finally, the \emph{cluster inequalities}
\eqref{eq:ilp-cluster} are violated when $C$ is a violated cluster.
We denote the cluster inequality for cluster $C$ as $\ineq{C}$ and the
0/1 variables involved as $\varsof{C}$.
As there is an exponential number of these, \gob
generates only those that improve the current linear
relaxation and they are referred to as \emph{cluster cuts}. This
itself is an NP-hard problem \cite{cussens2017bayesian}, which \gob also
encodes and solves as an ILP.
Interestingly, these inequalities are facets of the BNSL polytope
\cite{cussens2017bayesian}, so stand to improve the relaxation
significantly.

The \cpb solver~\cite{cpbayes} models BNSL as a constraint program. 
The CP model has a parent set
variable for each random variable, whose domain is the set of possible
parent sets, as well as order variables, which give a total order of
the variables that agrees with the partial order implied by the
DAG. The objective is the same as \eqref{eq:ilp-obj}.  It includes
channelling constraints between the set of variables and various
symmetry breaking and dominance constraints. It computes a lower bound
using two separate mechanisms: a component caching scheme and a
pattern database that is computed before search and holds the optimal
graphs for all orderings of partitions of the variables.
Acyclicity is enforced using a global constraint with a bespoke
propagator. The main routine of the propagator is \acycchecker
(Algorithm~\ref{alg:Acyc-Checker}), which returns an
order of all variables if the
current set of domains of the parent set variables may produce an
acyclic graph, or a partially completed order if
the constraint is unsatisfiable.
This algorithm is based on Property~\ref{prop:cluster}.

\begin{algorithm}[t]
  \DontPrintSemicolon
  \SetAlgoNoLine
  \SetKwFunction{acycchecker}{acycChecker}
  \SetKw{brk}{break}
  \acycchecker(\allvars, D) \;
  $order \gets \{\}$\;
  $changes \gets true$ \;
  \While{$changes$} {
    $changes \gets false$ \;
    \ForEach{$v \in \allvars \setminus order$} {
      \If{$\exists S \in D(v)$ s.t. $(S \cap V) \subseteq order$} { \lnl{alg:ln:witness}   $order \gets order + v$ \; 
        $changes \gets true$ \;
      }
    }
  }
  \Return {$order$}
\caption{Acyclicity Checker}
\label{alg:Acyc-Checker}
\end{algorithm}

Briefly, the algorithm takes the 
domains of the parent set variables
as input and greedily constructs an ordering of the variables, such
that if variable $v$ 
is later in the order than $v'$, then
$v \notin parents(v')$\footnote{We treat $order$
  as both a sequence and a set, as appropriate.}.
It does so by trying to pick a parent
set $S$ for an as yet unordered vertex such that $S$ is entirely
contained in the set of previously ordered vertices%
\footnote{When propagating the acyclicity constraint it always holds
  that $a \cap \allvars = a$, so this statement is true. In
  section~\ref{sec:minimisation}, we use the algorithm in a setting
  where this is not always the case.}.
If all assignments yield cyclic
graphs, it will reach a point where all remaining vertices are in a
violated cluster in all possible graphs, and it will return a
partially constructed order. If there exists an assignment that gives
an acyclic graph, it will be possible by property~\ref{prop:cluster} to
select from a variable in $\allvars \setminus order$ a parent set which
does not intersect $\allvars \setminus order$, hence is a subset of
$order$. The value $S$ chosen for each variable in
line~\ref{alg:ln:witness} also gives a witness of such an acyclic
graph.

An immediate connection between the \gob and \cpb models is that
the ILP variables $x_{v,S}, \forall S \in PS(v)$ are the direct 
encoding~\cite{Walsh00} of the parent set variables of the CP model. Therefore,
we use them interchangeably, i.e., we can refer to the value $S$ in $D(v)$ as 
$x_{v,S}$.

\section{Restricted Cluster Detection}\label{sec:Cluster}

One of the issues hampering the performance of \cpb is that it
computes relatively poor lower bounds at deeper levels of the search
tree. Intuitively, as the parent set variable domains 
get reduced by removing values that are inconsistent with
the current ordering, the lower
bound computation discards more information about the current state of
the problem.
We address this by
adapting the branch-and-cut approach of \gob. However, instead of
finding all violated cluster inequalities that may improve the LP
lower bound, we only identify a subset of them.

Consider the linear relaxation of the ILP
\eqref{eq:ilp-obj}--~\eqref{eq:ilp-integrality}, restricted to a
subset $\currentclusters$ of all valid cluster inequalities, i.e.,
with equation \eqref{eq:ilp-integrality} replaced by
$0 \leq x_{v,S} \leq 1 \forall v \in \allvars, S \in PS(v)$ and with
equation \eqref{eq:ilp-cluster} restricted only to clusters in
$\currentclusters$. We denote this $LP_{\currentclusters}$. We exploit
the following property of this LP.

\begin{theorem}
  \label{thm:rcclusters}
  Let $\yhat$ be a dual feasible solution of $LP_{\currentclusters}$
  with dual objective $o$. Then, if $C$ is a cluster such that
  $C \notin \currentclusters$ and the reduced cost $rc$ of all variables
  $\varsof{C}$ is greater than 0, there exists a dual feasible
  solution $\yhat$ of $LP_{\currentclusters \cup C}$ with dual
  objective $o' \geq o + \minrc{C}$ where 
  $\minrc C = \min_{x \in \varsof{C}} rc_{\yhat}(x)$.
\end{theorem}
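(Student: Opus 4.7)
The plan is to prove the theorem by explicitly constructing a dual feasible solution for $LP_{\currentclusters \cup C}$ from $\yhat$.

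First, I would write down the dual of $LP_{\currentclusters}$. The primal has an equality constraint from \eqref{eq:ilp-var} for each $v$, giving an unrestricted dual variable $y_v$, and an inequality constraint from \eqref{eq:ilp-cluster} for each $C' \in \currentclusters$, giving a nonnegative dual variable $\mu_{C'} \geq 0$. The dual constraint corresponding to primal variable $x_{v,S}$ is
\begin{equation*}
y_v + \sum_{\substack{C' \in \currentclusters \\ v \in C',\ S \in PS^{-C'}(v)}} \mu_{C'} \leq \sigma^v(S),
\end{equation*}
so the reduced cost of $x_{v,S}$ under $\yhat$ is the slack
\begin{equation*}
rc_{\yhat}(x_{v,S}) = \sigma^v(S) - y_v - \sum_{\substack{C' \in \currentclusters \\ v \in C',\ S \in PS^{-C'}(v)}} \mu_{C'}.
\end{equation*}
The dual objective is $o = \sum_v y_v + \sum_{C' \in \currentclusters} \mu_{C'}$, since the RHS of each primal constraint is $1$.

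Second, I would observe that the dual of $LP_{\currentclusters \cup C}$ has exactly one extra variable, $\mu_C \geq 0$, and its coefficient in the dual constraint for $x_{v,S}$ is $1$ precisely when $v \in C$ and $S \in PS^{-C}(v)$, i.e., precisely when $x_{v,S} \in \varsof{C}$, and $0$ otherwise. I would then set $\tilde y_v = y_v$ for all $v$, $\tilde\mu_{C'} = \mu_{C'}$ for all $C' \in \currentclusters$, and $\tilde\mu_C = \minrc{C}$, and check dual feasibility. For every primal variable $x_{v,S} \notin \varsof{C}$ the dual constraint is unchanged and hence satisfied. For $x_{v,S} \in \varsof{C}$, the new dual constraint is satisfied iff $\tilde\mu_C \leq rc_{\yhat}(x_{v,S})$, which holds by the definition of $\minrc{C}$ as the minimum reduced cost over $\varsof{C}$, and $\tilde\mu_C \geq 0$ follows from the hypothesis that all these reduced costs are strictly positive.

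Finally, the new dual objective is $o' = o + \tilde\mu_C \cdot 1 = o + \minrc{C}$, which gives the claimed bound. The main subtlety I would be careful about is the sign conventions in the dual (the $\geq$ orientation of the primal cluster inequalities forces $\mu_{C'} \geq 0$, which is exactly what makes the hypothesis $\minrc{C} > 0$ both meaningful and necessary); the rest is a direct complementary-slackness-style calculation. Note that the upper bounds $x_{v,S} \leq 1$ need not be added to the primal explicitly, since they are implied by \eqref{eq:ilp-var} together with $x_{v,S} \geq 0$, which keeps the dual clean.
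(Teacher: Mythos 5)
Your proof is correct and follows essentially the same route as the paper: extend $\yhat$ with the new dual variable for $\ineq{C}$ set to $\minrc{C}$, observe that this variable appears with coefficient $1$ only in the dual constraints of $\varsof{C}$ (whose slacks are exactly the reduced costs, all at least $\minrc{C}$), and read off the objective increase. You merely spell out the dual explicitly where the paper argues in one line; the added remarks about the free dual variables for the equality constraints and the redundancy of the upper bounds are sound but not needed.
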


\begin{proof}
  The
  only difference from $LP_{\currentclusters}$ to
  $LP_{\currentclusters \cup C}$ is the extra constraint $\ineq{C}$ in
  the primal and corresponding dual variable $y_C$. In the dual, $y_C$
  only appears in the dual constraints of the variables
  $\varsof{C}$ and in the objective, always with coefficient 1.
  Under the feasible dual solution
  $\yhat \cup \{y_C = 0\}$, these constraints have slack at least $\minrc{C}$,
  by the definition of reduced cost. Therefore, we can set
  $\yhat = \yhat \cup \{y_C = \minrc{C}\}$, which remains feasible and has
  objective $o' = o + \minrc{C}$, as required.
\end{proof}

Theorem~\ref{thm:rcclusters} gives a class of cluster cuts,
which we call \rcclusters, for reduced-cost clusters,
guaranteed to improve the lower bound. 
Importantly, this requires only 
a feasible, perhaps sub-optimal,
solution.

\begin{table}[ht]
\centering
\resizebox{0.5\columnwidth}{!}{%
\begin{tabular}{clr}
Variable           & Domain Value & Cost \\ \hline
0                  & $\{2\}$      & 0    \\ \hline
\multirow{2}{*}{1} & $\{2, 4\}$   & 0    \\ 
                   & $\{ \}$      & 6    \\ \hline
\multirow{2}{*}{2} & $\{1, 3\}$   & 0    \\ 
                   & $\{\}$       & 10   \\ \hline
\multirow{2}{*}{3} & $\{0\}$      & 0    \\ 
                   & $\{\}$       & 5    \\ \hline
\multirow{4}{*}{4} & $\{2, 3\}$   & 0    \\ 
                   & $\{3\}$      & 1    \\
                   & $\{2\}$      & 2    \\
                   & $\{\}$       & 3    \\ \hline
\end{tabular}%
}
\caption{BNSL instance used as running example.}
\label{tab:running_example}
\end{table}

\begin{example}[Running example]
  Consider a BNSL instance with domains as shown in
  Table~\ref{tab:running_example} and let
  $\currentclusters = \emptyset$. Then, $\yhat = 0$ leaves the reduced
  cost of every variable to exactly its primal objective coefficient. The
  corresponding \xhat assigns 1 to variables with reduced cost 0 and 0
  to everything else. These are both optimal solutions, with cost 0
  and \xhat is integral, so it is also a solution of the corresponding
  ILP. However, it is not a solution of the BNSL, as it contains
  several cycles, including $C = \{0, 2, 3\}$. The cluster inequality
  $\ineq{C}$ is violated in the primal and allows the dual bound to be
  increased.
\end{example}

We consider the problem of discovering \rcclusters within the CP model
of \cpb. First, we introduce the notation $LP_{\currentclusters}(D)$
which is $LP_{\currentclusters}$ with the additional constraint
$x_{v, S} = 0$ for each $S \notin D(v)$.
Conversely, $\rcD{D}{\currentclusters}{\yhat}$ is the set of
domains minus values whose corresponding variable 
in $LP_{\currentclusters}(D)$
has non-zero
reduced cost under $\yhat$, 
i.e., $\rcD{D}{\currentclusters}{\yhat} = D'$ where
$D'(v) = \{S \mid S \in D(v) \land rc_{\yhat}(x_{v,S}) = 0 \}$.
With this notation,
for values $S \notin D(v)$, 
$x_{v,S}=1$ is
infeasible in $LP_{\currentclusters}(D)$, hence effectively
$rc_{\yhat}(x_{v,S}) = \infty$.

\begin{theorem}
  \label{thm:boolofp}
  Given a collection of clusters $\currentclusters$, a set of domains
  $D$ and $\yhat$, a feasible dual solution of
  $LP_{\currentclusters}(D)$, there exists an \rccluster
  $C \notin \currentclusters$ if and only if
  $\rcD{D}{\currentclusters}{\yhat}$ does not admit an acyclic
  assignment.
\end{theorem}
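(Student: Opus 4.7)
The plan is to prove the biconditional by treating the two directions separately, using Property~\ref{prop:cluster} and \acycchecker (Algorithm~\ref{alg:Acyc-Checker}) as the main ingredients. The underlying intuition is that $\rcD{D}{\currentclusters}{\yhat}$ is exactly the domain obtained after removing every value whose LP evidence forces its variable to $0$, so a combinatorial obstruction to acyclicity in this filtered domain should correspond to an LP-level cluster certificate, and conversely.

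For the forward direction ($\Rightarrow$), I would take any RC-cluster $C \notin \currentclusters$ and unpack the definition: every $x_{v,S}$ with $v \in C$ and $S \in PS^{-C}(v)$ has $rc_{\yhat}(x_{v,S}) > 0$, and for $S \notin D(v)$ the variable is infeasible in $LP_{\currentclusters}(D)$ with effectively infinite reduced cost as the excerpt notes. Both cases give $S \notin \rcD{D}{\currentclusters}{\yhat}(v)$, so for every $v \in C$ each remaining $S \in \rcD{D}{\currentclusters}{\yhat}(v)$ must intersect $C$. This makes $C$ a violated cluster in every graph realisable from $\rcD{D}{\currentclusters}{\yhat}$, and Property~\ref{prop:cluster} rules out any acyclic assignment.

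For the reverse direction ($\Leftarrow$), I would run \acycchecker on $\rcD{D}{\currentclusters}{\yhat}$; since no acyclic assignment exists the algorithm returns a proper partial order $order$ with a nonempty residual $C := \allvars \setminus order$, and by the loop invariant every $S$ remaining in the domain of a vertex $v \in C$ intersects $C$. Translating back to reduced costs, each $S \in PS^{-C}(v)$ has either been pruned because $rc_{\yhat}(x_{v,S}) > 0$ or was infeasible to begin with, so every variable in $\varsof{C}$ has strictly positive reduced cost. The step I expect to be most delicate is concluding $C \notin \currentclusters$: the plan is to exploit that if $C \in \currentclusters$ then $y_C$ already contributes $-\yhat_C$ to every reduced cost in $\varsof{C}$, so ``all strictly positive'' means $\yhat_C$ can be raised without violating dual feasibility; I would therefore either pre-saturate $\yhat$ in that direction, which forces some reduced cost to $0$ and excludes $C$ from being flagged, or pick the witness as a minimal violated cluster in $\rcD{D}{\currentclusters}{\yhat}$ rather than whatever \acycchecker first returns.
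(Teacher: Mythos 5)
Your two main directions reproduce the paper's own proof essentially verbatim: for $(\Rightarrow)$, every value in $\varsof{C}$ is filtered out of $\rcD{D}{\currentclusters}{\yhat}$ (positive reduced cost, or infeasible because $S\notin D(v)$), so $C$ is a violated cluster in every graph realisable from the filtered domains and Property~\ref{prop:cluster} forbids an acyclic assignment; for $(\Leftarrow)$, the residual set $C = \allvars \setminus order$ left by \acycchecker has all surviving parent sets intersecting $C$, so every variable of $\varsof{C}$ has strictly positive reduced cost. That part is fine and is exactly what the paper does.

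The ``delicate step'' you flag --- establishing that the witness satisfies $C \notin \currentclusters$ --- is a genuine subtlety, and the paper's proof silently omits it, so you deserve credit for noticing. However, neither of your proposed repairs closes it for the theorem as literally stated. Pre-saturating $\yhat$ by raising $y_C$ yields a \emph{different} dual solution: driving some reduced cost in $\varsof{C}$ to zero \emph{enlarges} $\rcD{D}{\currentclusters}{\yhat}$ and may restore an acyclic assignment, so you end up proving a claim about a modified $\yhat$ rather than the given one. Minimality does not help either: take $V=\{v_1,v_2\}$, $\currentclusters=\{\{v_1,v_2\}\}$, and a feasible $\yhat$ under which the filtered domains are $\{\{v_2\}\}$ for $v_1$ and $\{\{v_1\}\}$ for $v_2$; there is no acyclic assignment, yet the unique violated cluster is $\{v_1,v_2\}$, which already lies in $\currentclusters$, so no \rccluster outside $\currentclusters$ exists. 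The honest fix is to add the hypothesis under which the theorem is actually invoked: $\yhat$ is produced by $DualSolve$/$DualImprove$, which saturate every cluster they process (some variable of $\varsof{C}$ is left with reduced cost $0$, and this persists because dual variables only ever increase). Under that invariant no member of $\currentclusters$ can be an \rccluster, so the cluster extracted by \acycchecker is automatically new, and both your argument and the paper's go through.
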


\begin{proof}
$(\Rightarrow)$ Let $C$ be such a cluster. Since for all $x_{v,S} \in \varsof{C}$,
none of these are in $\rcD{D}{\currentclusters}{\yhat}$, so
$\ineq{C}$ is violated and hence there is no acyclic assignment.

$(\Leftarrow)$ Consider once again \acycchecker, in
Algorithm~\ref{alg:Acyc-Checker}. When it fails to find a witness of
acyclicity, it has reached a point where
$order \subsetneq \allvars$ and for the remaining variables 
$C = \allvars \setminus order$, all
allowed parent sets intersect $C$. So if \acycchecker is called with
$\rcD{D}{\currentclusters}{\yhat}$, all values in $\varsof{C}$
have reduced cost greater than 0, so $C$ is an \rccluster.
\end{proof}

Theorem~\ref{thm:boolofp} shows that detecting unsatisfiability of
$\rcD{D}{\currentclusters}{\yhat}$ is enough to find an \rccluster.
Its proof also gives a way to extract
such a cluster from \acycchecker.

\begin{algorithm}[t]
  \DontPrintSemicolon
  \SetAlgoNoLine
  \SetKwFunction{rcclusterlb}{lowerBoundRC}
  \SetKw{brk}{break}
  \rcclusterlb(\allvars, D, $\currentclusters$) \;
  $\yhat \gets DualSolve(LP_{\currentclusters}(D))$ \;
  \While{True} {
    \lnl{alg:rcclusterlb:bigcluster} $C \gets \allvars \setminus \acycchecker(\allvars, \rcD{D}{\currentclusters}{\yhat})$ \;
    \lnl{alg:rcclusterlb:terminate} \If{$C = \emptyset$}{ 
       \Return $\tuple{cost(\yhat), \currentclusters}$
    }
    $C \gets minimise(C)$ \;
    $\currentclusters \gets \currentclusters \cup \{ C \}$ \;
    $\yhat \gets DualImprove(\yhat, LP_{\currentclusters}(D), C)$ \;
  }
\caption{Lower bound computation with \rcclusters}
\label{alg:rcclusterlb}
\end{algorithm}

Algorithm~\ref{alg:rcclusterlb} shows how
theorems~\ref{thm:rcclusters} and~\ref{thm:boolofp} can be used to
compute a lower bound. It is given the current set of domains and a
set of clusters as input. It first solves the dual of
$LP_{\currentclusters}(D)$, potentially suboptimally. Then, it uses
\acycchecker iteratively to determine whether there exists an
\rccluster $C$ under the current dual solution $\yhat$. If that cluster
is empty, there are no more \rcclusters, and it terminates
and returns a lower bound equal to the cost of $\yhat$ under
$LP_{\currentclusters}(D)$ and an updated pool of clusters. 
Otherwise, it
minimises $C$ (see section~\ref{sec:minimisation}),
adds it to the pool of
clusters and solves the updated LP. 
It does this by calling $DualImprove$, which solves
$LP_{\currentclusters}(D)$ exploiting 
the fact that only the cluster inequality $\ineq{C}$ has
been added.

\begin{example}
  Continuing our example, consider the behavior of \acycchecker with
  domains $\rcD{D}{\emptyset}{\yhat}$ after the initial dual solution
  $\yhat = 0$. Since the empty set has non-zero reduced cost for all
  variables, \acycchecker fails with $order = \{\}$, hence $C = V$. We
  postpone discussion of minimization for now, other than to observe
  that $C$ can be minimized to $C_1 = \{1,2\}$. We add $\ineq{C_1}$ to
  the primal LP and set the dual variable of $C_1$ to 6 in the new
  dual solution $\yhat_1$. The reduced costs of $x_{1,\{\}}$ and
  $x_{2, \{\}}$ are decreased by 6 and, importantly,
  $rc_{\yhat_1}(x_{1,\{\}}) = 0$. In the next iteration of \rcclusterlb,
  \acycchecker is invoked on $\rcD{D}{\{C_1\}}{\yhat_1}$ and returns
  the cluster $\{0,2,3,4\}$. This is minimized to $C_2 =
  \{0,2,3\}$. The parent sets in the domains of these variables that
  do not intersect $C_2$ are $x_{2, \{\}}$ and $x_{3, \{\}}$, so
  $\minrc{C_2} = 4$, so we add $\ineq{C_2}$ to the primal and we set
  the dual variable of $C_2$ to 4 in $\yhat_2$. This brings the dual
  objective to 10. The reduced cost of $x_{2, \{\}}$ is 0, so in the
  next iteration \acycchecker runs on $\rcD{D}{\{C_1, C_2\}}{\yhat_2}$
  and succeeds with the order $\{2,0,3,4,1\}$, so the lower bound
  cannot be improved further. This also happens to be the cost of the
  optimal structure.
\end{example}

\begin{theorem}
  \label{thm:lbtermination}
  Algorithm~\ref{alg:rcclusterlb} terminates but is not confluent.
\end{theorem}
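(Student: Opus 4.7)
The plan is to prove termination by monotone growth of $\currentclusters$, and then to establish non-confluence by exhibiting an example with divergent outcomes.

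For termination, the key step is to show that the cluster $C$ added to $\currentclusters$ at each iteration is not already present. By the proof of Theorem~\ref{thm:boolofp}, the cluster returned by \acycchecker on line~\ref{alg:rcclusterlb:bigcluster} satisfies $\minrc{C} > 0$ under the current $\yhat$. I then establish the invariant: for every $C' \in \currentclusters$, some $x_{v',S'} \in \varsof{C'}$ has reduced cost zero (a \emph{zero-slack witness}) under the current $\yhat$. This invariant is preserved inductively by $DualImprove$ when it increments $y_{C_k}$ from $0$ to $\minrc{C_k}$ as in the proof of Theorem~\ref{thm:rcclusters}: the reduced cost of $x_{v,S}$ changes only if $x_{v,S} \in \varsof{C_k}$, and a zero-slack witness of an earlier $C'$ cannot lie in $\varsof{C_k}$ — otherwise that witness would have had reduced cost zero, contradicting $\minrc{C_k} > 0$. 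The invariant then implies that any newly-detected $C$ with $\minrc{C} > 0$ is not in $\currentclusters$, so $|\currentclusters|$ strictly grows each iteration, giving termination after at most $2^{|\allvars|}$ steps.

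The main obstacle is pinning down the behaviour of $DualImprove$: the argument above assumes it saturates the single new dual coordinate exactly as in Theorem~\ref{thm:rcclusters}. For a more general LP-based implementation, a fallback is to track the dual objective, which strictly increases by $\minrc{C} > 0$ at every non-terminating iteration and is bounded above by the optimum of $LP_{2^{\allvars}}(D)$; for rational input scores the reachable objectives form a discrete set with bounded denominators, forcing termination.

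For non-confluence, I would extend the running example of Table~\ref{tab:running_example}. The cluster returned on line~\ref{alg:rcclusterlb:bigcluster} in the first iteration is typically large and admits several distinct minimal sub-clusters; choosing different minimisations yields different $\rcD{D}{\currentclusters}{\yhat}$ in later iterations and hence different sequences of discovered \rcclusters. It then suffices to exhibit two legal executions that terminate with strictly different values of $cost(\yhat)$. Since $DualImprove$ and $minimise$ are greedy and suboptimal by design, such divergent runs are expected and can be constructed explicitly, showing that the algorithm is not confluent.
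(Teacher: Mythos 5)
Your termination argument is correct and in fact more careful than the paper's, which simply asserts that there are finitely many cluster inequalities and that ``each iteration generates one.'' The missing justification in that one-liner is precisely the point you isolate: why the cluster produced at line~\ref{alg:rcclusterlb:bigcluster} (after minimisation) cannot already be in $\currentclusters$. Your zero-reduced-cost-witness invariant, preserved because $DualImprove$ only touches reduced costs of variables in $\varsof{C_k}$ and no earlier witness can lie there (else $\minrc{C_k}=0$), closes that gap cleanly; your fallback via the strictly increasing, bounded, discrete dual objective is also a legitimate alternative for a black-box $DualImprove$. So for the first half you take essentially the paper's route but actually prove the step the paper leaves implicit.

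The non-confluence half, however, is a plan rather than a proof. The statement requires exhibiting two legal executions with different final bounds, and you only assert that such runs ``are expected and can be constructed explicitly.'' The paper does supply a concrete (schematic) witness, and it is simpler than extending Table~\ref{tab:running_example}: take three clusters $C_1=\{v_1,v_2\}$, $C_2=\{v_2,v_3\}$, $C_3=\{v_3,v_4\}$ where the unit minimum reduced cost of $C_1$ and $C_2$ is realised by the same value $x_{2,\{4\}}$ and that of $C_2$ and $C_3$ by the same value $x_{3,\{1\}}$. If minimisation yields $C_1$ first, $x_{3,\{1\}}$ keeps positive reduced cost, $C_3$ is later discovered, and the bound reaches $2$; if it yields $C_2$ first, both shared values drop to reduced cost $0$, neither $C_1$ nor $C_3$ remains an \rccluster, and the algorithm stops at $1$. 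Note also that your suggested divergence mechanism (``different minimisations yield different reduced domains in later iterations'') is not by itself enough: different intermediate cluster sequences could still converge to the same bound, so you must actually verify that the two final objective values differ, which is what the paper's example is engineered to do. Until you commit to such an explicit instance, the non-confluence claim is unproven.
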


\begin{proof}
  It terminates because there is a finite number of cluster
  inequalities and each iteration generates one. In the extreme, all
  cluster inequalities are in $\currentclusters$ and the test at
  line~\ref{alg:rcclusterlb:terminate} succeeds, terminating the
  algorithm.

  To see that it is not confluent, consider an example with 3 clusters
  $C_1 = \{v_1, v_2\}, C_2 = \{v_2, v_3\}$ and $C_3 = \{v_3, v_4\}$
  and assume that the minimum reduced cost for each cluster 
  is unit and comes from
  $x_{2, \{4\}}$ and $x_{3, \{1\}}$,  i.e., the former value has
  minimum reduced cost for $C_1$ and $C_2$ and the latter for $C_2$
  and $C_3$. Then, if minimisation generates first $C_1$, the reduced
  cost of $x_{3, \{1\}}$ is unaffected by $DualImprove$, so it can
  then discover $C_3$, to get a lower bound of 2. On the other hand,
  if minimisation generates first $C_2$, the reduced costs of both
  $x_{2, \{4\}}$ and $x_{3, \{1\}}$ are decreased to 0 by
  $DualImprove$, so neither $C_1$ nor $C_3$ are \rcclusters under the
  new dual solution and the algorithm terminates with a lower bound of 1.
\end{proof}

\paragraph{Related Work.} The idea of performing propagation
on the subset of domains that have reduced cost 0 has been used
in the VAC algorithm for WCSPs~\cite{cooper2010soft}. Our method is more light weight,
as it only performs propagation on the acyclicity constraint, but
may give worse bounds.
The bound update mechanism 
in the proof of theorem~\ref{thm:rcclusters} is also
simpler than VAC and more akin to the ``disjoint
core phase'' in core-guided MaxSAT solvers~\cite{maxsat-survey}.

\subsection{Cluster Minimisation}\label{sec:minimisation}

\SetKwFunction{minimisecluster}{MinimiseCluster}

It is crucial for the quality of the lower bound produced by
Algorithm~\ref{alg:rcclusterlb} that the \rcclusters discovered by
\acycchecker are minimised, as the following example shows. 
Empirically, omitting minimisation rendered the lower bound
ineffective.


\begin{example}
  \label{ex:minimisation}
  Suppose that we attempt to use \rcclusterlb without cluster
  minimization. Then, we use the cluster given by \acycchecker,
  $C_1 = \{0,1,2,3,4\}$. We have $\minrc{C_1} = 3$, given from the
  empty parent set value of all variables. This brings the reduced
  cost of $x_{4, \{\}}$ to 0. It then proceeds to find the cluster
  $C_2 = \{0,1,2,3\}$ with $\minrc{C_2} = 2$ and decrease the reduced
  cost of $x_{3,\{\}}$ to 0, then $C_3 = \{0,1,2\}$ with
  $\minrc{C_3} = 1$, which brings the reduced cost of $x_{1,\{\}}$ to
  0. At this point, \acycchecker succeeds with the order
  $\{4, 3, 1, 2, 0\}$ and \rcclusterlb returns a lower bound of 6,
  compared to 10 with minimization. The order produced by
  \acycchecker also disagrees with the optimum structure.
\end{example}

Therefore, when we get an \rccluster $C$ at line
\ref{alg:rcclusterlb:bigcluster} of algorithm~\ref{alg:rcclusterlb},
we want to extract a minimal \rccluster (with respect to set
inclusion) from $C$, i.e., a cluster $C' \subseteq C$, such that for
all $\emptyset \subset C'' \subset C'$, $C''$ is not a cluster.

Minimisation problems like this are handled with an
appropriate instantiation of QuickXPlain~\cite{Junker2004}.  These
algorithms find a minimal subset of constraints, not variables. We can
pose
this as a constraint set minimisation
problem by implicitly treating a variable as the constraint ``this
variable is assigned a value'' and treating acyclicity
as a hard constraint.

However, the property of being an \rccluster is not monotone. For
example, consider the variables $\{v_1,v_2, v_3,v_4\}$ and $\yhat$ such
that the domains restricted to values with 0 reduced cost are
$\{\{v_2\}\}, \{\{v_1\}\}, \{\{v_4\}\}, \{\{v_3\}\}$,
respectively. Then $\{v_1,v_2, v_3,v_4\}$, $\{v_1,v_2\}$ and
$\{v_3,v_4\}$ are \rcclusters. but $\{v_1,v_2,v_3\}$ is not because
the sole value in the domain of $v_3$ does not intersect
$\{v_1,v_2,v_3\}$. We instead minimise the set of variables that does not
admit an acyclic solution and hence \emph{contains} an \rccluster. 
A minimal unsatisfiable set that contains
a cluster is an \rccluster, so this allows us to
use 
the variants of QuickXPlain. We focus on RobustXPlain, which
is called the deletion-based algorithm in SAT literature for
minimising unsatisfiable subsets~\cite{marquessilva-survey}.  The main
idea of the algorithm is to iteratively pick a variable and categorise it
as either appearing in all minimal subsets of $C$, in which case we
mark it as necessary,
or not, in which case we discard it.
To detect if a variable
appears in all minimal unsatisfiable subsets, we only have to test
if omitting this variable yields a set with no unsatisfiable subsets,
i.e., with no violated clusters. This is given in pseudocode in
Algorithm~\ref{alg::minimise-cluster}.
This exploits a subtle feature of \acycchecker as described in
Algorithm~\ref{alg:Acyc-Checker}: if it is called with a subset of
\allvars, it does not try to place the missing variables in the order
and allows parent sets to use these missing variables. Omitting
variables from the set given to \acycchecker acts as omitting the
constraint that these variables be assigned a value.
The complexity of
\minimisecluster is $O(n^3d)$, where $n = |V|$ and 
$d = \max_{v \in \allvars} |D(v)|$, a convention we adopt throughout.

\begin{algorithm}[t]
  \DontPrintSemicolon
  \minimisecluster(\allvars, D, C) \;
  $N = \emptyset$ \;
  \While{$C \neq \emptyset$}{
    Pick $c \in C$ \;
    $C \gets C \setminus \{c\}$ \;
    $C' \gets \allvars \setminus \acycchecker(N \cup C, D)$ \;
    \If{$C' = \emptyset$}{
      $N \gets N \cup \{c\}$ \;
    } \Else {
      $C \gets C' \setminus N$ \;
    }
  }
  \Return $N$ \;
\caption{Find a minimal \rccluster subset of $C$}
\label{alg::minimise-cluster}
\end{algorithm}

\section{Solving the Cluster LP}
\label{sec:lpsolve}

Solving a linear program is in polynomial time, so in principle $DualSolve$
can be implemented using any of the commercial or free software libraries
available for this. However, solving this LP using a general
LP solver is too expensive in this setting. As a data point, solving the
instance {\tt steel\_BIC} with our modified solver 
took 25,016 search nodes and 45 seconds of search, 
and generated $5,869$ \rcclusters.
Approximately 20\% of search time was spent solving the LP
using the greedy algorithm that we describe in this section.
CPLEX took around 70 seconds to solve $LP_{\currentclusters}$ with these
cluster inequalities once. While this data point is not proof
that solving the LP exactly is too expensive, it is a pretty strong indicator.
We have also not explored nearly linear time algorithms for solving positive 
LPs~\cite{lineartimeLP}.

Our greedy algorithm is
derived from theorem~\ref{thm:rcclusters}. Observe first that $LP_{\currentclusters}$
with $\currentclusters=\emptyset$, i.e., only with constraints~\eqref{eq:ilp-var}
has optimal dual solution $\yhat_0$ that assigns the dual variable $y_v$ of 
$\sum_{S \in PS(v)} x_{v,S} = 1$ to $\min_{S \in PS(v)} \sigma^v(S)$.
That leaves at least one of $x_{v,S}, S \in PS(v)$ with reduced cost 0 
for each $v \in \allvars$.
$DualSolve$ starts with $\yhat_0$ and then iterates over $\currentclusters$.
Given $\yhat_{i-1}$ and a cluster $C$, it sets $\yhat_i = \yhat_{i-1}$
if $C$ is not an \rccluster. Otherwise, it increases the lower bound
by $c = \minrc{C}$ and sets
$\yhat_{i} = \yhat_{i-1} \cup \{y_C = c\}$. It remains to specify the order
in which we traverse $\currentclusters$.

We sort clusters by increasing size $|C|$, breaking ties
by decreasing minimum cost of all original parent set
values in $\varsof{C}$. This favours finding 
non-overlapping cluster cuts with high minimum cost.
%
%
In section~\ref{sec:exp}, we give experimental evidence that
this computes better lower bounds.

$DualImprove$ can be implemented
by discarding previous information and calling
$DualSolve(LP_{\currentclusters}(D))$.
Instead, it 
uses the \rccluster $C$
to update the solution
without revisiting previous clusters.

In terms of implementation, we store $\varsof{C}$ for each cluster,
not $\ineq{C}$. During $DualSolve$, we maintain
the reduced costs of variables rather than the dual
solution, otherwise computing each reduced cost would require
iterating over all cluster inequalities that contain a
variable. Specifically, we maintain
$\Delta_{v,S} = \sigma^v(S) - rc_{\yhat}(x_{v,S})$.
In order to test whether a cluster $C$ is an \rccluster, we need to
compute $\minrc{C}$. To speed this up, we
associate with each stored cluster a {\em support pair} $(v,S)$ 
corresponding to the last minimum cost
found. If $rc_{\yhat}(v, S) = 0$, the cluster is 
not an \rccluster and is skipped.
Moreover, parent set domains are sorted by increasing score
$\sigma^v(S)$, so
$S \succ S' \iff \sigma^v(S) > \sigma^v(S')$. We also maintain
the maximum amount of cost transferred to the lower bound,
$\Delta^{max}_v = \max_{S \in D(v)} \Delta_{v,S}$ for every 
$v \in \allvars$. We stop iterating over $D(v)$ as soon as
$\sigma^v(S) - \Delta^{max}_v$ is greater than or equal to the current
minimum because 
$\forall S' \succ S, \sigma^v(S') -\Delta_{v,b} \geq
\sigma^v(S)-\Delta^{max}_v$.
In practice, on very large instances $97.6\%$ of 
unproductive clusters are detected by support
pairs and $8.6\%$ of the current domains are 
visited for the rest\footnote{
See the supplementary material for more.}.

To keep a bounded-memory cluster pool,
we discard frequently unproductive clusters. We
throw away large clusters with a productive ratio
$\frac{\#productive}{\#productive + \#unproductive}$ smaller than
$\frac{1}{1,000}$. Clusters of size 10 or less are always kept because
they are often more productive and their number
is bounded. 



\section{GAC for the Acyclicity Constraint}\label{sec:GAC}

Previously, van Beek and Hoffmann\cite{cpbayes} showed that using
\acycchecker as a subroutine, one can construct a GAC propagator for
the acyclicity constraint by probing, i.e., detecting unsatisfiability
after assigning each individual value and pruning those values that
lead to unsatisfiability. \acycchecker is in $O(n^2d)$, so this
gives a GAC propagator in $O(n^3d^2)$.
We show here that we can
enforce GAC in time $O(n^3d)$,
a significant improvement given that $d$ is usually much larger
than $n$.

Suppose \acycchecker finds a witness of acyclicity and
returns the order $O = \{v_1, \ldots, v_n\}$.
Every parent set $S$ of a variable $v$ that is a subset of 
$\{v' \mid v' \prec_O v\}$ is supported by $O$. We call such values
consistent with $O$.
Consider now $S \in D(v_i)$ which is inconsistent with
$O$, therefore we have to probe
to see if it is supported. We know that
during the probe, nothing forces \acycchecker
to deviate from $\{v_1, \ldots, v_{i-1}\}$.
So in a successful probe, \acycchecker constructs a new order $O'$
which is identical to $O$ in the first $i-1$ positions and in which it
moves $v_i$ further down. 
Then all values
consistent with $O'$ are supported.
This suggests that instead of probing each value, we can
probe different orders.

\begin{algorithm}[t]
  \DontPrintSemicolon
  \SetAlgoNoLine
  \SetKwFunction{acycprop}{Acyclicity-GAC}
  \SetKw{brk}{break}
  \acycprop(V, D) \;
  $O \gets \acycchecker(V, D)$ \;
  \If{$O \subsetneq \allvars$} { \Return Failure }
  \ForEach{$v \in \allvars$}{
    $changes \gets true$ \;
    $i \gets O^{-1}(v)$ \;
    $prefix \gets \{ O_1, \ldots, O_{i-1} \}$ \;
    \lnl{alg:GACprop:push} \While{$changes$} { 
      $changes \gets false$\;
      \ForEach{$w \in O \setminus (prefix \cup \{v\})$} {
        \If{$\exists S \in D(w)$ s.t. $S \subseteq prefix$}{
          $prefix \gets prefix \cup \{w\}$ \; 
          $changes \gets true$ \;
        }
      }
    } 
    Prune $\{S \mid S \in D(v) \land S \nsubseteq prefix\}$ \;
  }
  \Return Success
\caption{GAC propagator for acyclicity}
\label{alg:GACprop}
\end{algorithm}

\acycprop, shown in Algorithm \ref{alg:GACprop}, exploits this insight.
It ensures first
that \acycchecker can produce a valid order $O$. 
For each variable $v$, it constructs a new order $O'$ from $O$ 
so that $v$ is as late as possible. It then prunes all
parent set values of $v$ that are inconsistent with $O'$.


\begin{theorem}
\label{th:GAC}
  Algorithm~\ref{alg:GACprop} enforces GAC on the Acyclicity constraint in $O(n^3d)$.
\end{theorem}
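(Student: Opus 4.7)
The plan is to prove correctness (that Algorithm~\ref{alg:GACprop} establishes GAC) and complexity ($O(n^3d)$) separately. For correctness I would fix a vertex $v$ and denote by $P_v$ the value of $prefix$ at line 13. The pivot of the whole argument is the characterization that $P_v$ is the largest subset of $\allvars \setminus \{v\}$ that admits a topological ordering in which every vertex $w$ has some parent set $S_w \in D(w)$ contained in its predecessors; I will call such sets \emph{constructible}.

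To establish this characterization I would introduce the monotone operator $T_v(P) = P \cup \{w \neq v : \exists S \in D(w), S \subseteq P\}$. Its fixed point $T_v^\infty(\emptyset)$ is constructible by the very sequence of additions, and any constructible set $P$ with $v \notin P$ fits inside $T_v^\infty(\emptyset)$ by induction along its constructing order (the $k$-th vertex enters $T_v$ at step $k$ because its witness lies in the previously added $k{-}1$ vertices). So $T_v^\infty(\emptyset)$ is the largest constructible set avoiding $v$. The algorithm seeds the iteration with $\{O_1, \ldots, O_{i-1}\}$ rather than $\emptyset$, but this seed is already constructible since the initial \acycchecker call returned a valid order $O$; by monotonicity $P_v = T_v^\infty(prefix) = T_v^\infty(\emptyset)$.

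GAC then follows in both directions. For soundness, if $S \nsubseteq P_v$ is pruned but some acyclic completion assigns parent set $S$ to $v$, the set $P^*$ of vertices preceding $v$ in that completion's topological order is constructible, avoids $v$, and contains $S$; maximality gives $S \subseteq P^* \subseteq P_v$, a contradiction. For completeness, given $S \in D(v)$ with $S \subseteq P_v$, I would construct an explicit acyclic completion by placing $P_v$ in a constructing order, then $v$ with parent set $S$, and finally the remaining vertices $R = \allvars \setminus (P_v \cup \{v\})$ in the relative order they occupy in $O$. For each $w \in R$ at position $k = O^{-1}(w)$, the witness $S_w$ from $O$'s validity lies inside $\{O_1, \ldots, O_{k-1}\}$, and each such $O_j$ is already placed (it is in $P_v$, equals $v$, or is an $R$-vertex preceding $w$), so the scheduling succeeds.

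For complexity, the inner while loop at line~\ref{alg:GACprop:push} is structurally the main loop of \acycchecker restricted to $\allvars \setminus \{v\}$ starting from a non-empty seed. Implemented with an inverted index from each vertex $u$ to the parent sets containing $u$, together with per-parent-set counters of elements not yet in $prefix$, one outer iteration costs $O(\sum_{v, S \in D(v)} |S|) = O(n^2 d)$. The pruning at line 13 inspects at most $d$ parent sets of size $O(n)$, adding $O(nd)$ per $v$. The initial \acycchecker call is also $O(n^2 d)$. Summing over the $n$ iterations of the outer for-loop yields the claimed $O(n^3 d)$. The main obstacle is the completeness step: one has to verify that seeding the iteration with $\{O_1, \ldots, O_{i-1}\}$ does not cause $P_v$ to undershoot the true largest constructible set, which is exactly what the $T_v^\infty(\emptyset) = T_v^\infty(prefix)$ identity guarantees.
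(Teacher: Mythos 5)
Your proof is correct, and it rests on the same underlying insight as the paper's: for each $v$, the algorithm computes the \emph{maximal} prefix not containing $v$, and this prefix absorbs the predecessors of $v$ in any order that supports a given value $S$. The two arguments package this differently. The paper fixes a supporting order $Q$ and an arbitrary valid order $O$, and explicitly splices $O_p + Q_p + v + O_s$ into a new valid order supporting $S$; the maximality of the algorithm's $prefix$ (hence the fact that $S\subseteq Q_p$ survives pruning) is left implicit in the phrase ``pushing $v$ towards the end.'' You instead characterise the computed $prefix$ as the greatest fixed point reachable by the closure operator $T_v$, prove seed-independence ($T_v^\infty(\{O_1,\dots,O_{i-1}\})=T_v^\infty(\emptyset)$), and then get pruning soundness directly from maximality. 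Your route buys two things the paper glosses over: (i) it makes precise why seeding with $O$'s prefix rather than $\emptyset$ loses nothing, and (ii) it explicitly proves the converse direction --- that every \emph{retained} value is supported --- via the $P_v$-then-$v$-then-$R$ construction, which is needed for the claim that the algorithm \emph{enforces} GAC rather than merely a sound relaxation of it (the paper's suffix argument for $O_s$ is the same construction, but is only invoked for values already known to be supported). Your complexity accounting, via an inverted index and per-parent-set counters giving $O(\sum_{v,S}|S|)=O(n^2d)$ per outer iteration, also justifies the $O(n^2d)$ cost of \acycchecker that the paper takes as given. No gaps.
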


\begin{proof}
Let $v \in \allvars$ and $S \in D(v)$. Let $O = \{O_1, \ldots, O_n\}$ and $Q = \{Q_1, \ldots, Q_n\}$ be two valid orders such that $O$ does not support $S$ whereas $Q$ does. 
It is enough
to show that we can compute from $O$ a new order $O'$ that supports $S$
by pushing $v$ towards the end.
Let $O_i = Q_j = v$ and let $O_p = \{O_1, \ldots, O_{(i-1)}\}$, $Q_p = \{Q_1, \ldots, Q_{(j-1)}\}$ and $O_s = \{O_{i+1}, \ldots, O_n\}$. 

Let $O'$ be the order $O_p$ followed by $Q_p$, followed by $v$, followed by $O_s$, 
keeping only the first occurrence of each variable when there are duplicates. $O'$
is a valid order: $O_p$ is witnessed by the assignment that witnesses $O$, $Q_p$ by
the assignment that witnesses $Q$, $v$ by $S$ (as in $Q$) and $O_s$ by the assignment that
witnesses $O$. It also supports $S$, as required.

Complexity is dominated by repeating $O(n)$ times
the loop at line
\ref{alg:GACprop:push}, which is a version
of \acycchecker so has complexity $O(n^2d)$ for a total
$O(n^3d)$.
\end{proof}





\section{Experimental Results}
\label{sec:exp}

\begin{table*}[ht]
  \centering
  \small
  \begin{tabular}[t]{l l l  |r |r |r |r |r }
  \toprule
Instance	&  $|V|$	&  $\sum |ps(v)|$	&  \gob	&   \cpb	&  \cpbcut	& \cpbcut $\setminus$ GAC 	&  \cpbcut$^{chrono}$ \\
\hline                                       
%
  carpo100\_BIC	 &   60	 &   424	 &   \textbf{0.6}	 &   78.5	 (29.7) &   40.6	 (0.0) &   40.7	 (0.0) &   40.6 (0.0) \\
  alarm1000\_BIC	 &   37	 &   1003	 &   \textbf{1.2}	 &   204.2	 (172.9) &   27.8	 (0.7) &   28.8	 (1.5) &   29.9 (2.7) \\
  flag\_BDe	 &   29	 &   1325	 &   4.4	 &   19.0	 (18.1) &   \textbf{0.9}	 (0.1) &   0.9	 (0.1) &   1.3 (0.5) \\
  wdbc\_BIC	 &   31	 &   14614	 &   99.8	 &   629.8	 (576.6) &   \textbf{48.9}	 (1.6) &   49.1	 (1.7) &   50.3 (3.1) \\
  kdd.ts	 &   64	 &   43584	 &   \textbf{327.6}	 &   $\dagger$ &   1314.5	 (158.2) &   1405.4	 (239.5) &   1663.2  (512.4) \\
  steel\_BIC	 &   28	 &   93027	 &   $\dagger$	 &   1270.9	 (1218.9) &   \textbf{98.0}	 (49.2) &   99.2	 (50.1) &   130.0 (81.2) \\
  kdd.test	 &   64	 &   152873	 &   1521.7	 &   $\dagger$&   \textbf{1475.3}	 (120.6) &   1515.9	 (128.5) &   1492.4 (109.5) \\
  mushroom\_BDe	 &   23	 &   438186	 &   $\dagger$	 &   176.4	 (56.0) &   135.4	 (33.7) &   137.0	 (35.0) &   \textbf{133.7} (31.9) \\
\hline
  bnetflix.ts	 &   100	 &   446406	 &   $\dagger$	 &   \textbf{629.0}	 (431.4) &   1065.1	 (878.4) &   1111.4	 (931.0) &   1132.4 (936.3) \\
  plants.test	 &   69	 &   520148	 &   $\dagger$	 &   $\dagger$&   \textbf{18981.9}	 (17224.0) &   30791.2	 (29073.0) &   $\dagger$\\
  jester.ts	 &   100	 &   531961	 &   $\dagger$	 &   $\dagger$&   \textbf{10166.0}	 (9697.9) &   14915.9	 (14470.1) &   23877.6 (23325.7) \\
  accidents.ts	 &   111	 &   568160	 &   \textbf{1274.0}	 &   $\dagger$&   2238.7	 (904.5) &   2260.3	 (986.1) &   2221.1 (904.8) \\
  plants.valid	 &   69	 &   684141	 &   $\dagger$	 &   $\dagger$&   \textbf{12347.6}	 (8509.7) &   19853.1	 (15963.1) &   $\dagger$\\
  jester.test	 &   100	 &   770950	 &   $\dagger$	 &   $\dagger$&   \textbf{17637.8}	 (16979.2) &   21284.0	 (20661.9) &   $\dagger$\\
  bnetflix.test	 &   100	 &   1103968	 &   $\dagger$	 &   \textbf{3525.2}	 (3283.8) &   8197.7	 (7975.6) &   8057.3	 (7841.4) &   7915.0 (7686.3) \\
  bnetflix.valid	 &   100	 &   1325818	 &   $\dagger$	 &   \textbf{1456.6}	 (1097.0) &   9282.0	 (8950.3) &   10220.5	 (9898.4) &   9619.7 (9257.4) \\
  accidents.test	 &   111	 &   1425966	 &   4975.6	 &   $\dagger$&   \textbf{3661.7}	 (641.5) &   4170.1	 (1213.6) &   3805.2 (687.6) \\

                                       
\bottomrule
\end{tabular}
\caption{Comparison of \cpbcut against \gob and \cpb. Time limit for
  instances above the line is 1h, for the rest 10h. Instances are
  sorted by increasing total domain size. For variants of \cpb we
  report in parentheses time spent in search, after preprocessing
  finishes. $\dagger$ indicates a timeout.}
\label{tab:mytable}
\end{table*}

\subsection{Benchmark Description and Settings}

The datasets come from the UCI Machine Learning Repository\footnote{\url{http://archive.ics.uci.edu/ml}}, the Bayesian Network Repository\footnote{\url{http://www.bnlearn.com/bnrepository}}, and the Bayesian Network Learning and Inference Package\footnote{\url{https://ipg.idsia.ch/software.php?id=132}}.
Local scores were computed from the datasets using B. Malone's code\footnote{\url{http://urlearning.org}}. BDeu and BIC scores were used for medium size instances (less than 64 variables) and only BIC score for large instances (above 64 variables). The maximum number of parents was limited to 5 for large instances (except for {\tt accidents.test} with maximum of 8), a high value that allows even learning complex structures~\cite{ScanagattaNIPS2015}. For example, {\tt jester.test} has 100 random variables, a sample size of $4,116$ and $770,950$ parent set values. For medium instances, no restriction was applied except for some BDeu scores (limit sets to 6 or 8 to complete the computation of the local scores within 24 hours of CPU-time~\cite{LeeBeek2017}). 

We have modified the C++ source of \cpb v1.1 by adding our lower bound mechanism and GAC propagator.
We call the resulting solver \cpbcut and have made it publicly available.
For the evaluation, we compare with \gob v1.6.3 using \scip v3.2.1 with \cplex v12.7.0.
All computations were performed on a single core of Intel Xeon E5-2680 v3 at 2.50 GHz and 256 GB of RAM with a 1-hour (resp. 10-hour) CPU time limit for medium (resp. large) size instances. We used default settings for \gob with no approximation in branch-and-cut ({\tt limits/gap = 0}). We used the same settings in \cpb and \cpbcut for their preprocessing phase (partition lower bound sizes $l_{min},l_{max}$ and local search number of restarts $r_{min},r_{max}$). We used two different settings depending on problem size $|V|$:  $l_{min}=20,l_{max}=26,r_{min}=50,r_{max}=500$ if $|V| \leq 64$, else $l_{min}=20,l_{max}=20,r_{min}=15,r_{max}=30$.


\subsection{Evaluation}
In Table \ref{tab:mytable} we present the runtime to solve each instance
to optimality with \gob, \cpb, and \cpbcut with default settings, without the GAC algorithm and without sorting the cluster pool (leaving clusters in chronological order, rather than the heuristic ordering presented in Section \ref{sec:lpsolve}). 
For the instances with $\|V\| \leq 64$ (resp. $>64)$, we had a time limit of 1 hour (resp. 10 hours). We exclude instances that were solved within the time limit by \gob and have a search time of less than 10 seconds for \cpb and all variants of \cpbcut. We also exclude 8
instances that were not solved to optimality by any method. 
This leaves us 17 instances to analyse here
out of 69 total.
More details are given in the supplemental material, available from the authors' web pages.

\paragraph{Comparison to \gob.}
\cpb was already proven to be competitive to \gob \cite{cpbayes}. Our results in Table \ref{tab:mytable} confirm this while showing that neither is clearly better. When it comes to our solver \cpbcut, for all the variants, all instances solved within the time limit by \gob are solved, unlike \cpb. On top of that, \cpbcut solves 9 more instances optimally.

\paragraph{Comparison to \cpb.}
We have made some low-level performance improvements in preprocessing of \cpb, so 
for a more fair comparison, we should compare only the search time, shown in parentheses. 
\cpbcut takes several orders of magnitude less search time to optimally solve most instances,
the only exception being the {\tt bnetflix} instances. \cpbcut also proved optimality 
for 8 more instances within the time limit.

\paragraph{Gain from GAC.} The overhead of GAC pays off as the instances get larger. While we do not see either a clear improvement nor a downgrade for the smaller instances,
search time for \cpbcut improves by up to $47\%$ for larger instances compared to \cpbcut $\setminus$ GAC.

\paragraph{Gain from Cluster Ordering.} We see that the ordering heuristic
improves the bounds computed by our greedy dual LP algorithm significantly.
Compared to not ordering the clusters, we see improved runtime throughout and
3 more instances solved to optimality.

\section{Conclusion}

We have presented a new set of inference techniques for BNSL using constraint programming,
centered around the expression of the acyclicity constraint.
These new techniques exploit and improve on
previous work on linear relaxations of the
acyclicity constraint and the associated
propagator. The resulting solver explores a different trade-off
on the axis of strength of inference versus speed, with \gob on
one extreme and \cpb on the other. We showed experimentally
that the trade-off we
achieve is a better fit than either extreme, as our solver \cpbcut
outperforms both \gob and \cpb.
The major obstacle towards better scalability to larger instances
is the fact that domain sizes grow exponentially with the number
of variables. This is to some degree unavoidable, so our
future work will focus on exploiting the structure of these
domains to improve performance.

\section*{Acknowledgements}
We thank the GenoToul (Toulouse, France) Bioinformatics platform for its 
support. This work has been partly funded by
the ``Agence nationale de la Recherche'' (ANR-16-CE40-0028 Demograph project and ANR-19-PIA3-0004 ANTI-DIL chair of Thomas Schiex).

\bibliographystyle{named}
\bibliography{ijcai21}
\end{document}